\documentclass[letterpaper, 10 pt, conference]{ieeeconf}

\IEEEoverridecommandlockouts
 
\usepackage{amsmath,graphicx,multicol}
\usepackage{amsfonts}
\usepackage{graphicx}
\usepackage{multicol}
\usepackage{color}
\usepackage{bbm}
\usepackage{cite}
\usepackage{amssymb}
\usepackage{algorithm}
\usepackage{algorithmic}
\usepackage{tabularx}
\usepackage[utf8]{inputenc}
\usepackage[english]{babel}

\usepackage{amsthm}
\usepackage{subcaption}
\usepackage{url}
\usepackage{mathtools}
\usepackage{comment}

\newtheorem{definition}{Definition}[]
\newtheorem{problem}{Problem}[]

\newtheorem{proposition}{Proposition}[]
\newtheorem{theorem}{Theorem}[]

\def\R{{\mathbb{R}}}
\def\Pr{{\mathit{Pr}}}
\def\mc{{\mathcal{MC}}}

\def\S{S}
\def\St{\tilde{S}}
\def\sinit{\mu_{init}}

\def\P{P}
\def\Pt{\tilde{P}}
\def\U{U}
\def\V{V}

\def\one{\mathbf{1}}
\def\gf{{\nabla f}}
\DeclareMathOperator*{\rank}{rank}
\def\path{\sigma}

\newcommand{\ts}{\textsuperscript}

\def\R{{\mathbb{R}}}
\def\N{{\mathbb{N}}}

\newcommand{\RNum}[1]{\uppercase\expandafter{\romannumeral #1\relax}}
\newcolumntype{C}{>{\centering\arraybackslash}b{\widthof{positions}}}
\newcolumntype{d}{D{.}{.}{-2}}

\title{\bf Identifying Sparse Low-Dimensional Structures in Markov Chains:\\A Nonnegative Matrix Factorization Approach}

\author{
Mahsa Ghasemi, Abolfazl Hashemi, Haris Vikalo, and Ufuk Topcu
\thanks{Mahsa Ghasemi, Abolfazl Hashemi and Haris Vikalo are with the Department of Electrical and Computer Engineering, and Ufuk Topcu is with the Department of Aerospace Engineering and Engineering Mechanics of University of Texas at Austin, Austin, TX 78712 USA.}%
\thanks{This work was supported in part by ONR grant \# N000141712623, and NSF grants \# 1652113 and \# 1809327.
}%
}

\begin{document}
    
\maketitle

\begin{abstract}
We consider the problem of learning low-dimensional representations for large-scale Markov chains. We formulate the task of representation learning as that of mapping the state space of the model to a low-dimensional state space, called the \textit{kernel space}. The kernel space contains a set of meta states which are desired to be representative of only a small subset of original states. To promote this  structural property, we constrain the number of nonzero entries of the mappings between the state space and the kernel space. By imposing the desired characteristics of the representation, we cast the problem as a constrained nonnegative matrix factorization. To compute the solution, we propose an efficient block coordinate gradient descent and theoretically analyze its convergence properties.
\end{abstract}

\section{Introduction}\label{sec:intro}
A variety of queries about stochastic systems boil down to study  of Markov chains and their properties. They have been widely used as a modeling tool in applications including control~\cite{kushner1971introduction}, machine learning~\cite{bishop2006pattern}, and computational biology~\cite{eddy1996hidden}. Moreover, they create the foundation for more complex probabilistic graphical models including hidden Markov models and Markov decision processes~\cite{koller2009probabilistic,ghasemi2019perception}.

In many practical settings, a system modeled as a Markov chain, has a large state space. For instance, fine discretization of a zero-input dynamical model with continuous space may lead to a Markov chain with a huge discrete state space. The fact that analyzing such large-scale models may be intractable has motivated significant research on model reduction algorithms. These algorithms attempt to create compressed abstractions that enable efficient downstream analysis without compromising the performance.

A key enabling factor in abstracting Markov chains is the existence of certain structural properties of the characterizing transition probabilities. For instance, the transition probabilities, captured by a stochastic matrix, may be low-rank or sparse. Therefore, one can exploit these structural properties to construct abstractions that accurately approximate the original model. 
The present work is motivated by the state aggregation framework for reducing the complexity of
reinforcement learning and control systems~\cite{abel2018state}. State aggregation schemes  attempt to group similar states into a small number of meta states, which are typically handpicked based on domain-specific knowledge~\cite{rogers1991aggregation,bertsekas1996neuro}, or based on a given similarity metric or feature function~\cite{tsitsiklis1996feature}.

In this paper, we propose an algorithm to find a surrogate representation that approximates the original Markov chain while having a low-dimensional state space. The proposed framework aims to learn a bidirectional mapping between the original high-dimensional state space and the low-dimensional state space, referred to as the kernel space. Additionally, to improve representativeness of the states in the kernel space, we constrain the mappings to be sparse. We model the task of learning the mappings and the kernel transition as a combinatorial optimization problem. Then, we relax this formulation and establish a sparsity-promoting constrained nonnegative matrix factorization problem.
In order to solve this factorization, we propose an efficient block coordinate gradient descent algorithm that starting from an initial guess, learns the bidirectional mappings and the kernel transition in an iterative fashion. We further prove that under certain conditions on the step sizes, the algorithm converges to a stationery point of the proposed optimization problem. We complement our methodology with extensive simulation results where we demonstrate efficacy of the proposed algorithm in terms of the quality of the low-dimensional representation as well as its computational cost.

\subsection{Related Work}\label{ssec:rel}

In the analysis of dynamical systems, different model-reduction techniques have been designed, such as approximating the transfer
operators~\cite{molgedey1994separation}, dynamic mode decomposition~\cite{schmid2010dynamic}, and data-driven approximations~\cite{klus2018data}.
In control theory and reinforcement learning, state abstraction has been widely studied as a way of reducing the complexity of computing the optimal controller or optimal value function~\cite{bertsekas1996neuro,ren2002state}.
Additionally, a related direction of research, called representation learning, tries to construct basis functions for representing high-dimensional value functions. Different Laplacian-based methods have been proposed to generate a surrogate for the exact transition operator~\cite{johns2007constructing,parr2007analyzing,petrik2007analysis}.

Matrix factorization is an optimization framework that decomposes a matrix into a product of two or more matrices~\cite{lee2001algorithms}. In general matrix factorization framework, as opposed to spectral decomposition, one can impose additional desired structural properties. Common structural properties are those of being low-rank or sparse that can be promoted by nuclear norm regularization and $\ell_1$-norm regularization, respectively. Furthermore, matrix factorization is typically amenable to efficient gradient descent solutions.
The problem of decomposing a matrix into a product of factors arises in different applications such as learning Markov models~\cite{cybenko2011learning} and bioinformatics~\cite{hashemi2018sparse}. What makes matrix factorization methods appealing is the fact that their solution complexity depends on the rank of the factors which is typically much smaller than the input matrix. 

Recovery of a low-rank probability transition matrix has been considered in~\cite{hsu2012spectral,huang2016recovering,li2018estimation,zhang2019spectral}. The majority of the proposed methods are based on spectral decomposition framework. 
In~\cite{duan2019state}, the authors use the notion of anchor states to enhance the interpretability of the meta states and further, develop a method of learning a soft-aggregation model from a set of system's trajectories.
In contrast, we propose an optimization formulation for the task of learning a low-dimensional representation of a known Markov chain. This formulation can easily induce different desired structural properties, such as sparsity, by imposing additional constraints. Furthermore, it enables an efficient solution that relies on block coordinate gradient descent.
\section{Problem Formulation}\label{sec:prob}
In this section, we provide the outline of the related concepts and definitions, and formally state the problem of learning representations for Markov chains.

\subsection{Preliminaries}

In this paper, we focus on time-homogeneous discrete-time Markov chains with finite states, as formally defined below.

\begin{definition}[Markov Chain]
	A Markov chain is a random process such that its evolution is characterized by a tuple $\mc = (\S, \sinit, \P)$, where $S$ is a finite set of states with cardinality $|S| = n$, $\sinit$ is an initial distribution over the states, and $\P: \S \times \S \to [0, 1] \subseteq \R$ is a probabilistic transition function such that for all $s \in \S$, ${\sum_{s'\in \S}\P(s,s')=1}$.
\end{definition}
A finite path in $\mc$ is a realization of a finite-length sequence $X_0,X_1,\ldots$ of states, denoted by $\path = x_0 x_1 x_2 \ldots$, such that $x_0$ is in the support of $\sinit$ and $\forall i \in \mathbb{Z}: \P(x_i,x_{i+1}) > 0$. Using the Markovian property, the probability of sampling $\path = x_0 x_1 x_2 \ldots x_T$ is
\begin{equation*}
\Pr(\path) = \sinit(x_0) \sum_{t=1}^{T} \P(x_{t-1},x_t),
\end{equation*}
and the probability of going from state $s_i$ at time step $t$ to state $s_j$, in $m$ time steps, is $\Pr(X_{t+m}=s_j | X_{t}=s_i) = p_{ij}^{(m)}$, where $p_{ij}^{(m)} = [P^m]_{ij}$ is an entry of the $m$-step transition matrix.

Runnenburg~\cite{runnenburg1966markov} introduced the notion of Markov chains with small rank as a type of dependence that is close to independence. Hoekstra~\cite{hoekstra1983markov} has further analyzed properties of Markov chains with small rank. Next, we provide the formal definition of the nonnegative rank of a Markov chain that will later motivate the proposed factorization.

\begin{definition}[Nonnegative Rank of Markov Chain]
	Let $\P$ denote the transition matrix of a Markov chain $\mc$. The nonnegative rank of $\mc$ is the smallest $k \in \N$ for which the following decomposition exists:
	\begin{equation}
		\Pr(X_{t+1} | X_t) = \sum_{l=1}^{k} f_l(X_t) g_l(X_{t+1}),
	\end{equation}
	where $f_1,f_2,\ldots,f_k$ and $g_1,g_2,\ldots,g_k$ are real-valued functions mapping $\S$ to $\R_+$.
\end{definition}
In particular, $f_1,f_2,\ldots,f_k$ denote the left Markov features and $g_1,g_2,\ldots,g_k$ denote the right Markov features. Without loss of generality, one can assume that the left and right Markov features are probability mass functions. Notice that if the nonnegative rank of $\mc$ is $k$, it holds that $\rank(\P) \le k$~\cite{hoekstra1983markov}.
In other words, the nonnegative rank of a Markov chain upperbounds the rank of its transition matrix.

The next proposition states a critical property of nonnegative rank of Markov chains that we will later exploit.
\begin{proposition}[Decomposition into Stochastic Matrices\cite{zhang2019spectral}]\label{prop:1}
	The nonnegative rank of a Markov chain is $k$ if and only if there exists $\U \in \R_+^{n \times k}$, $\Pt \in \R_+^{k \times k}$, and $\V \in \R_+^{k \times n}$ such that $\P = \U \Pt \V$, where $U$, $P$, and $\V$ are stochastic matrices, i.e., $\U \one = \one$, $\Pt \one = \one$, and $\V \one = \one$.
\end{proposition}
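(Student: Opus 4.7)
The plan is to prove both directions by translating between the sum decomposition in the definition of nonnegative rank and the three-matrix factorization, matching the number of sum terms to the inner dimension. Since the nonnegative rank is the smallest $k$ admitting the sum form, a construction at every fixed $k$ in each direction will force the minimal inner dimensions of the two formulations to agree.

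For the easier direction ($\Leftarrow$), I would collapse the three-matrix product into a two-matrix product by setting $W \eqdef \tilde{P}V \in \R_+^{k \times n}$. Row-stochasticity is preserved: $W\one = \tilde{P}V\one = \tilde{P}\one = \one$. Reading off entries, $P_{ij} = \sum_{l=1}^{k} U_{il} W_{lj}$, which yields a decomposition $\P(s,s') = \sum_{l=1}^{k} f_l(s) g_l(s')$ with $f_l(s_i) \eqdef U_{il}$ and $g_l(s_j) \eqdef W_{lj}$, both nonnegative and $g_l$ already a pmf on $\S$. Hence the nonnegative rank of $\mc$ is at most $k$.

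For ($\Rightarrow$), I would start from a $k$-term decomposition $\P(s,s') = \sum_{l=1}^{k} f_l(s) g_l(s')$ and renormalize so that each factor has the right stochasticity pattern. Setting $\beta_l \eqdef \sum_{s'} g_l(s')$ and discarding any index with $\beta_l = 0$, I would replace $g_l$ by $\bar{g}_l \eqdef g_l/\beta_l$ (now a pmf) and $f_l$ by $\bar{f}_l \eqdef \beta_l f_l$, which leaves the sum unchanged. Summing over $s'$ and using $\sum_{s'}\P(s,s') = 1$ then forces $\sum_l \bar{f}_l(s) = 1$ for every $s$. Defining $U_{il} \eqdef \bar{f}_l(s_i)$, $V_{lj} \eqdef \bar{g}_l(s_j)$, and $\tilde{P} \eqdef I_k$, all three matrices are nonnegative and row-stochastic, and $(U\tilde{P}V)_{ij} = \sum_l U_{il} V_{lj} = P_{ij}$ recovers the transition matrix.

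The main subtlety is the normalization in the forward direction: it is precisely the row-stochasticity of $\P$ that turns the renormalized $\bar{f}_l$ into a pmf indexed by $l$ (rather than by $s$), which is exactly what is needed for $U$ to be row-stochastic; choosing $\tilde{P} = I_k$ sidesteps any further redistribution of constants between the three factors. Minimality of $k$ transfers in both directions because neither construction inflates nor shrinks the inner dimension relative to the number of sum terms, so the two formulations characterize the same integer, namely the nonnegative rank of $\mc$.
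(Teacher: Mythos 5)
Your proof is correct. Note first that the paper itself offers no proof of this proposition: it is imported verbatim from the cited reference, so there is no in-paper argument to compare against. Judged on its own, your two-directional construction works. The ($\Leftarrow$) step of collapsing $W \eqdef \Pt\V$ into a single row-stochastic factor and reading off $f_l, g_l$ from the columns of $\U$ and rows of $W$ is sound, and the ($\Rightarrow$) step correctly exploits the row-stochasticity of $\P$: normalizing each $g_l$ to a pmf and absorbing the constants $\beta_l$ into $f_l$ forces $\sum_l \bar{f}_l(s) = 1$ for every $s$, which is exactly row-stochasticity of $\U$, and taking $\Pt = I_k$ is a legitimate (if degenerate) choice of kernel transition. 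Two small points worth tightening. First, the clause about discarding indices with $\beta_l = 0$ is unnecessary under minimality: if some $g_l \equiv 0$, that term contributes nothing and the nonnegative rank would be strictly less than $k$, a contradiction, so no index is ever discarded. Second, the proposition as literally stated (``the nonnegative rank is $k$ if and only if there exists a factorization with inner dimension $k$'') is false without the minimality qualifier, since one can always pad a factorization to a larger inner dimension; your closing paragraph implicitly reads the statement as an equality of the two minimal inner dimensions, which is the intended and correct interpretation, but it would be worth stating explicitly that each direction preserves the count of terms and hence the two minima coincide.
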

In Proposition~\ref{prop:1}, $\Pt$ resembles a surrogate lower-dimensional model for the transition matrix $\P$ if $\P$ admits a low nonnegative rank. We refer to the state space and the transitions in this abstract model as \textit{kernel space} and \textit{kernel transition}, respectively. 
In Proposition~\ref{prop:1}, $\U$ maps the states of the original Markov chain into the kernel space, $\Pt$ indicates the kernel transition, and $\V$ maps the kernel states back to the original states. 
In control and reinforcement learning, rows of $\U$ correspond to aggregation distributions while rows of $\V$ correspond to disaggregation distributions~\cite{bertsekas1995dynamic}.

The strength of the model abstraction setting introduced in Proposition~\ref{prop:1} is that it is independent of the downstream analysis to be performed on the Markov chain. Therefore, once such abstraction is found, it can be used for accelerating different types of analyses. For instance, in the next proposition, we show how the $m$-step transition matrix can be computed more efficiently by using the factorized model.
\begin{proposition}[Efficient $m$-Step Transition]\label{prop:2}
	Given a Markov chain $\mc = (\S, \sinit, \P)$, assume that a perfect low-rank decomposition of the transition matrix exists such that $\P = \U \Pt \V$, $\Pt \in \R_+^{k \times k}$. Let $K = \V \U \Pt$. Then, the $m$-step transition matrix of $\mc$ can be computed by
	\begin{equation*}
		\Pr(X_{t+m} | X_{t}) = \sum_{l_1=1}^{k} \sum_{l_2=1}^{k} 
		\U_{X_{t},l_1} [\Pt K^{m-1}]_{l_1 l_2} \V_{l_2,X_{t+m}}.
	\end{equation*}
\end{proposition}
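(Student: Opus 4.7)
The plan is to show that the matrix identity $P^m = U\tilde{P}K^{m-1}V$ holds, and then read off the entrywise formula by expanding the triple matrix product.

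First I would observe that the claimed formula is exactly the $(X_t, X_{t+m})$ entry of $U\tilde{P}K^{m-1}V$: indeed, $[U\tilde{P}K^{m-1}V]_{ij} = \sum_{l_1,l_2} U_{i,l_1}[\tilde{P}K^{m-1}]_{l_1 l_2} V_{l_2,j}$. So the proposition reduces to verifying that $P^m = U\tilde{P}K^{m-1}V$, since $\Pr(X_{t+m}\mid X_t) = [P^m]_{X_t,X_{t+m}}$ by time homogeneity.

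Next I would prove the identity $P^m = U\tilde{P}K^{m-1}V$ by induction on $m$. The base case $m=1$ is immediate from the hypothesis $P = U\tilde{P}V$, since $K^0$ is the $k\times k$ identity. For the inductive step, assuming $P^m = U\tilde{P}K^{m-1}V$, I would write
\begin{equation*}
P^{m+1} = P^m \cdot P = \bigl(U\tilde{P}K^{m-1}V\bigr)\bigl(U\tilde{P}V\bigr) = U\tilde{P}K^{m-1}(VU\tilde{P})V = U\tilde{P}K^{m-1}K\,V = U\tilde{P}K^{m}V,
\end{equation*}
where the third equality uses associativity of matrix multiplication and the fourth uses the definition $K = VU\tilde{P}$. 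This closes the induction.

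An equivalent way to see the identity, which I might include as a remark, is to unroll the product $(U\tilde{P}V)(U\tilde{P}V)\cdots(U\tilde{P}V)$ and regroup consecutive factors $VU\tilde{P}$ into copies of $K$; there are $m-1$ such internal junctions, leaving a leading $U\tilde{P}$ and a trailing $V$. There is no real obstacle here: the statement is essentially algebraic bookkeeping, and neither stochasticity nor nonnegativity of $U$, $\tilde{P}$, $V$ is actually invoked in the derivation (those properties were only needed to guarantee that the factorization exists via Proposition~\ref{prop:1}). The main payoff to emphasize is computational: each application of $K$ is a $k\times k$ multiplication, so computing $\tilde{P}K^{m-1}$ costs $O(mk^3)$, after which the outer multiplications by $U$ and $V$ cost $O(nk^2)$ per entry lookup, which is substantially cheaper than forming $P^m$ directly when $k \ll n$.
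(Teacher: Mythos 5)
Your proof is correct. The paper actually states Proposition~\ref{prop:2} without any proof, so there is nothing to compare against; your reduction to the matrix identity $P^m = U\Pt K^{m-1}V$ and the induction (or equivalently the unrolling of $(U\Pt V)^m$ with the $m-1$ internal junctions $VU\Pt$ regrouped into copies of $K$) is exactly the natural argument that fills this gap, and you are right that neither stochasticity nor nonnegativity is needed for the identity itself. One small quibble with your closing remark: your count of $\mathcal{O}(mk^3)$ for forming $\Pt K^{m-1}$ by repeated $k\times k$ multiplication is the honest figure and actually disagrees with the paper's claimed $\mathcal{O}(mk^2)$ (which would only hold if one propagates a single $k$-vector, e.g.\ a fixed row of $U\Pt$, through the powers of $K$ rather than forming the full matrix); also, once $\Pt K^{m-1}$ is available, a single entry lookup costs $\mathcal{O}(k^2)$, not $\mathcal{O}(nk^2)$. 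Neither point affects the validity of the proposition or of your proof.
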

Hence, one can reduce the complexity of computing the $m$-step transition matrix from $\mathcal{O} (mn^2)$ to $\mathcal{O} (mk^2)$. 

\subsection{Problem Statement}

In many applications, often the Markov chain model of a system has underlying structural properties, including possession of a low-rank or sparse transition function. Motivated by this fact, we seek an abstraction of a Markov chain in a low-dimensional kernel space. To that end, we need to find the mapping from the original state space to the state space of the abstracted (surrogate) model as well as the inverse mapping from the state space of the surrogate model to the original state space. Figure~\ref{fig:mapping} demonstrates a pictorial overview of the mapping between the spaces. Essentially, a transition in the original model can be represented through three steps: step 1 maps a state in the original model to meta states in the surrogate model; step 2 is a transition inside the surrogate model; and step 3 is a mapping from a meta state back to original states. Additionally, we would like the meta states to be representative of a small subset of states. This property means that each meta state should be connected to as few states as possible and hence, we impose that by looking for sparse mappings between the spaces. 

\begin{problem}
	Given a Markov chain $\mc = (\S, \sinit, \P)$, we aim to find a kernel space and kernel transition, denoted by $(\St, \Pt)$, with lower dimensionality, i.e., $|\St| \ll |\S|$. Further, we look for a sparse bidirectional mapping $(\U,\V)$ where $U$ represents the mapping from $\S$ to $\St$ while $\V$ represents the mapping from $\St$ to $\S$. The surrogate model $(\St, \Pt)$ along the bidirectional mapping $(\U,\V)$ must be such that the decomposition property $\P = \U \Pt \V$ holds.
\end{problem}

\section{Approach}\label{sec:alg}
Let $n = |\S|$ to be the size of the high-dimensional state space and $k$ denote the nonnegative rank of the $\mc$. Let $\mathcal{D}: \R^{n\times n}\times \R^{n\times n} \rightarrow \R_+$ denote a metric on the space of $n\times n$ matrices. As we discussed in Section \ref{sec:prob}, we seek to promote sparsity patterns in the rows of the bidirectional mapping $(\U,\V)$. Therefore, in order to find the factorization in Problem 1, we propose the following optimization task
\begin{equation}\label{p0pre}
\begin{aligned}
& \underset{\U \ge 0,\Pt \ge 0,\V \ge 0}{\text{min}}
\quad \mathcal{D}(\P,\U\Pt\V)\\
& \text{s.t.}\quad \sum_{j = 1}^k \U_{ij} = 1, \quad \|u_{i}\|_0 \leq s_i^{(u)}, \forall i \in [n], \\
&\qquad \sum_{j = 1}^k \Pt_{\ell j} = 1, \quad \forall \ell \in [k],\\
&\qquad \sum_{j = 1}^n \V_{\ell j} = 1,  \quad \|v_{\ell}\|_0 \leq s_\ell^{(v)}, \forall \ell \in [k],\\
\end{aligned}
\end{equation}
where $\|.\|_0$ is the so-called $\ell_0$-norm and returns the number of nonzero entries of its argument, and $\{s_i^{(u)}\}$ and $\{s_\ell^{(v)}\}$ are positive integers that determine the extent of the desired sparsity structure in the rows of $\U$ and $\V$. Notice that because of the  $\ell_0$-norm constraints, \eqref{p0pre} is a combinatorial optimization problem and generally NP-hard to solve. Therefore, we propose to relax these constraints by using the $\ell_1$-norm which is the convex envelope of the $\ell_0$-norm and is known to promote sparsity in the solution of an optimization problem. Following this idea and by specifying $\mathcal{D}(X,Y) = \frac{1}{2}\|X-Y\|^{2}_{F}$ as the metric, we consider the relaxed and regularized problem
\begin{equation}\label{p0}
\begin{aligned}
& \underset{\U \ge 0,\Pt \ge 0,\V \ge 0}{\text{min}}
\quad \frac{1}{2}\|\P-\U\Pt\V\|^{2}_{F}+\lambda_u\|\U\|_1+\lambda_v\|\V\|_1\\
& \text{s.t.}\quad  \U \one = \one, \quad \Pt \one = \one, \quad \V \one = \one.\\
\end{aligned}
\end{equation}
Here, $\lambda_u>0$ and $\lambda_v>0$ are the regularization parameters that determine the sparsity level of the rows of the bidirectional mapping matrices $\U$ and $\V$. Recall that sparsity of $\U$ means that each state is mapped to one (or few) meta state(s). On the other hand, sparsity of $\V$ asks for mapping of a meta state to a few states. This sparsity promoting terms ensure that the meta states are a good representative of their corresponding states. Additionally, the constraints of the optimization make sure that each of matrices are a stochastic matrix, i.e., can be interpreted as a transition matrix.

\begin{figure}[t]
	\centering
	\includegraphics[width=0.45\textwidth]{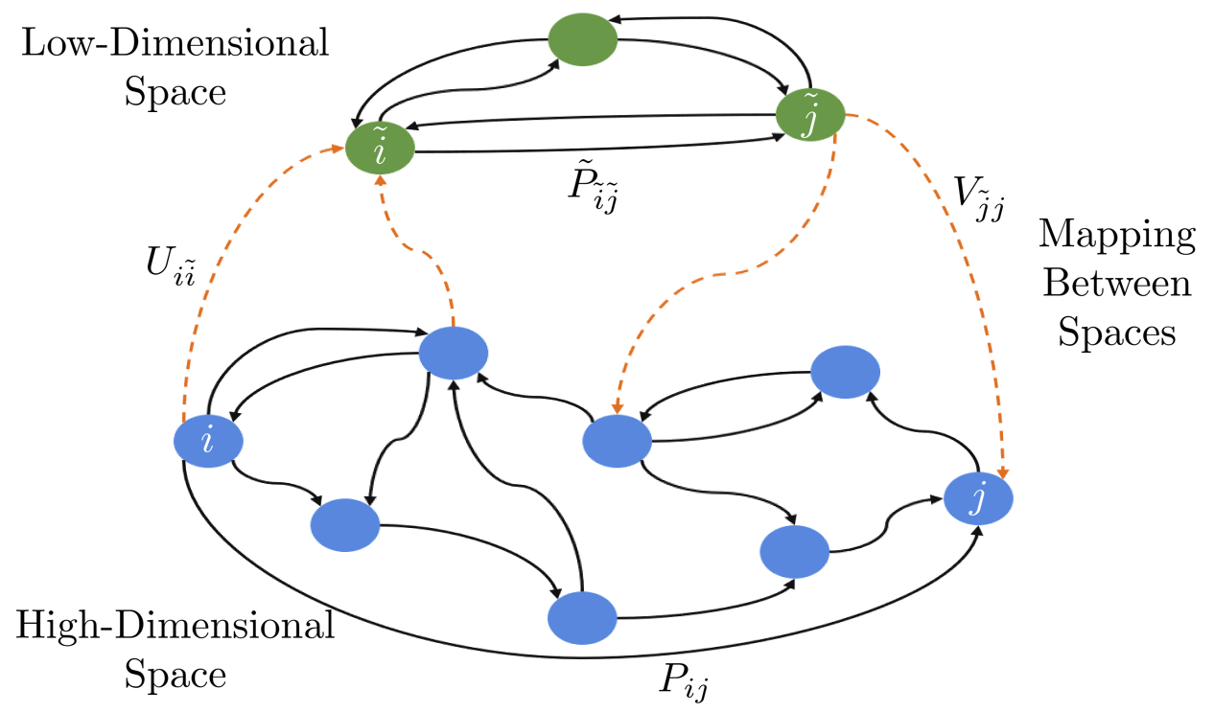}
	\caption{Mapping between high- and low-dimensional spaces. A transition between two states in the original Markov chain ($P_{ij}$) is equivalent to concatenation of the following sequence: a mapping from high-dimensional space to low-dimensional space ($U_{i\tilde{i}}$), a transition in the low-dimensional space ($\tilde{P}_{\tilde{i}\tilde{j}}$), and a mapping from low-dimensional space back to the high-dimensional space ($V_{\tilde{j}j}$).}
	\label{fig:mapping}
\end{figure}

\renewcommand\algorithmicdo{}
\begin{algorithm}[t]
	\caption{Block Coordinate Gradient Descent (BCGD)}
	\label{Alg1}
	\begin{algorithmic}[1]
		\STATE \textbf{Input:} Probability transition matrix $\P$, number of low-dimensional states $k$, step sizes $\alpha$, $\beta$, and $\gamma$, regularization parameters $\lambda_u$ and $\lambda_v$, maximum number of iterations $T$\\
		\STATE \textbf{Output:} Factor matrices $\U$, $\Pt$, and $\V$\\ 
		\STATE \textbf{Initialization:} Initialize $\U_0$ at random\\
		\FOR  { $t=0, 1, 2\dots, T-1$}
		\STATE Update rule for $\U_{t+1}$
		\begin{itemize}
			\item $\gf(\U_t) = -(\P-\U_t\Pt_t\V_t)\V_t^\top\Pt_t^\top$ 
			\item $\U_{t+\frac{1}{2}} = \U_{t}-\alpha_t\gf(\U_t)$
			\item $\U_{t+1} = \Pi_{\triangle_k}\left(\mathcal{T}_{\frac{\lambda_u}{2}}(\U_{t+\frac{1}{2}})\right)$ (Algorithm \ref{Alg2})
		\end{itemize}
		\STATE Update rule for $\Pt_{t+1}$
		\begin{itemize}
			\item $\gf(\Pt_t) = -\U_{t+1}^\top(\P-\U_{t+1}\Pt_t\V_t)\V_t^\top$ 
			\item $\Pt_{t+\frac{1}{2}} = \Pt_{t}-\beta_t\gf(\Pt_t)$
			\item $\Pt_{t+1} = \Pi_{\triangle_k}(\Pt_{t+\frac{1}{2}})$ (Algorithm \ref{Alg2})
		\end{itemize}
		\STATE Update rule for $\V_{t+1}$
		\begin{itemize}
			\item $\gf(\V_t) = -\Pt_{t+1}^\top\U_{t+1}^\top(\P-\U_{t+1}\Pt_{t+1}\V_t)$ 
			\item $\V_{t+\frac{1}{2}} = \V_{t}-\gamma_t\gf(\V_t)$
			\item $\V_{t+1} = \Pi_{\triangle_n}\left(\mathcal{T}_{\frac{\lambda_v}{2}}(\V_{t+\frac{1}{2}})\right)$ (Algorithm \ref{Alg2})
		\end{itemize}
		\ENDFOR
	\end{algorithmic}
\end{algorithm}

The objective function in \eqref{p0} consists of a convex function and hence 
is convex in each of the matrices when the other matrices are fixed. However, due to the fact that the first term in the objective function contains a product of the unknowns, \eqref{p0} is generally a nonconvex program. 
Notice that even if $P$ is perfectly decomposable into the corresponding factors, any permutation of the low-rank abstraction $\Pt$ is also a solution. Therefore, in general, Problem~\ref{p0} has at least $k!$ global optima.

To facilitate a computationally efficient search for the solution of \eqref{p0}, we rely on a modified gradient search algorithm which exploits the special structures of $\U$, $\Pt$, and $\V$. The algorithm (summarized as Algorithm \ref{Alg1}) is essentially a block coordinate gradient descent (BCGD) method that alternatively updates matrices $\U$, $\Pt$, and $\V$ in an iterative fashion starting from an initial point $(\U_0,\Pt_0,\V_0)$. That is, in $(t+1)\ts{st}$ iteration ($t = 0,\dots,T-1$ where $T$ is the total number of iterations), given $(\U_t,\Pt_t,\V_t)$ we optimize with respect to $\U$ to find $\U_{t+1}$. Similarly, we find updated $\Pt_{t+1}$ and $\V_{t+1}$ using the values $(\U_{t+1},\Pt_t,\V_t)$ and $(\U_{t+1},\Pt_{t+1},\V_t)$, respectively. 

In Algorithm \ref{Alg1}, $\mathcal{T}_{\eta}(x) = (|x|-\eta)_+\mathrm{sgn}(x)$ is the so-called
shrinkage-thresholding operator that acts on each element of the given matrix, and $\Pi_{\triangle_d}(.)$ denotes the projection operator that projects each row of its argument onto the probability simplex in $\R^d$. This projection can be efficiently computed by the method of \cite[Algorithm 1]{chen2011projection} that we summarize in Algorithm \ref{Alg2} for completeness.

\renewcommand\algorithmicdo{}
\begin{algorithm}[t]
	\caption{Projection onto $\triangle_d$}
	\label{Alg2}
	\begin{algorithmic}[1]
		\STATE \textbf{Input:} $y = [y_1,\dots,y_d]^\top \in \R^d$\\
		\STATE \textbf{Output:} projection $\Pi_{\triangle_d}(y)$\\ 
		\STATE  Sort $y$ in the ascending order as $y_{(1)}\leq \dots \leq y_{(d)}$\\
		\FOR  { $i=d-1, d-2, \dots, 1$}\vspace{0.1cm}
		\STATE $b_i = \frac{\sum_{j=i+1}^d y_{(j)} - 1}{d-i}$\vspace{0.1cm}
		 \IF {$b_i \geq y_{(i)}$}
		 \STATE $\bar{b} = b_i$
		 \RETURN $\Pi_{\triangle_d}(y) = (y-\bar{b})_+$
		\ENDIF
		\ENDFOR \vspace{0.2cm}
		\STATE $\bar{b} = \frac{\sum_{j=1}^d y_{(j)} - 1}{d}$\vspace{0.1cm}
		\RETURN $\Pi_{\triangle_d}(y) = (y-\bar{b})_+$
	\end{algorithmic}
\end{algorithm}

\begin{figure*}
	\centering
	\begin{subfigure}{0.45\textwidth}
		\centering
		\includegraphics[width=\linewidth]{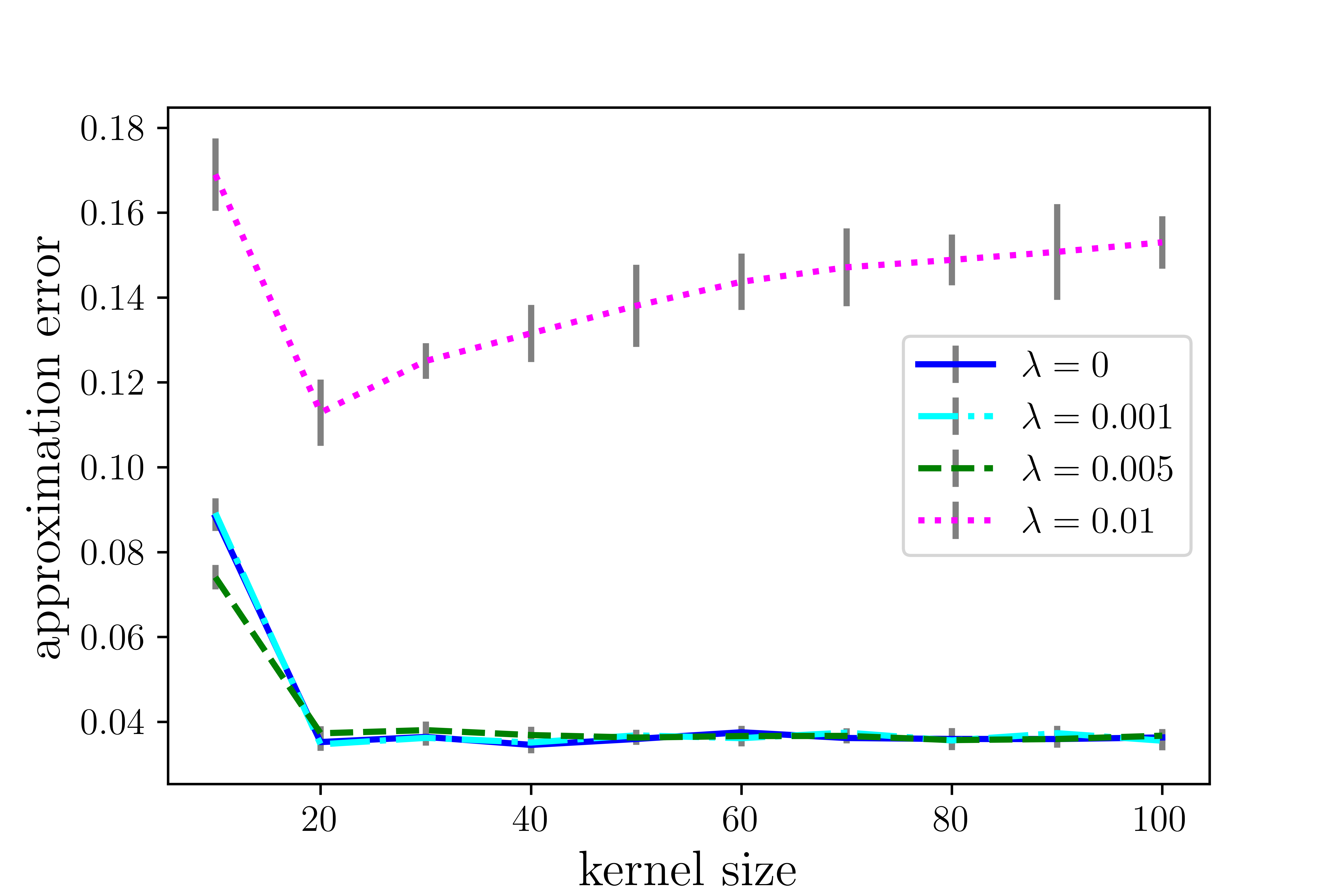}
		\caption{\footnotesize approximation error}
		\label{fig:sub1}
	\end{subfigure}%
	\begin{subfigure}{.45\textwidth}
		\centering
		\includegraphics[width=\linewidth]{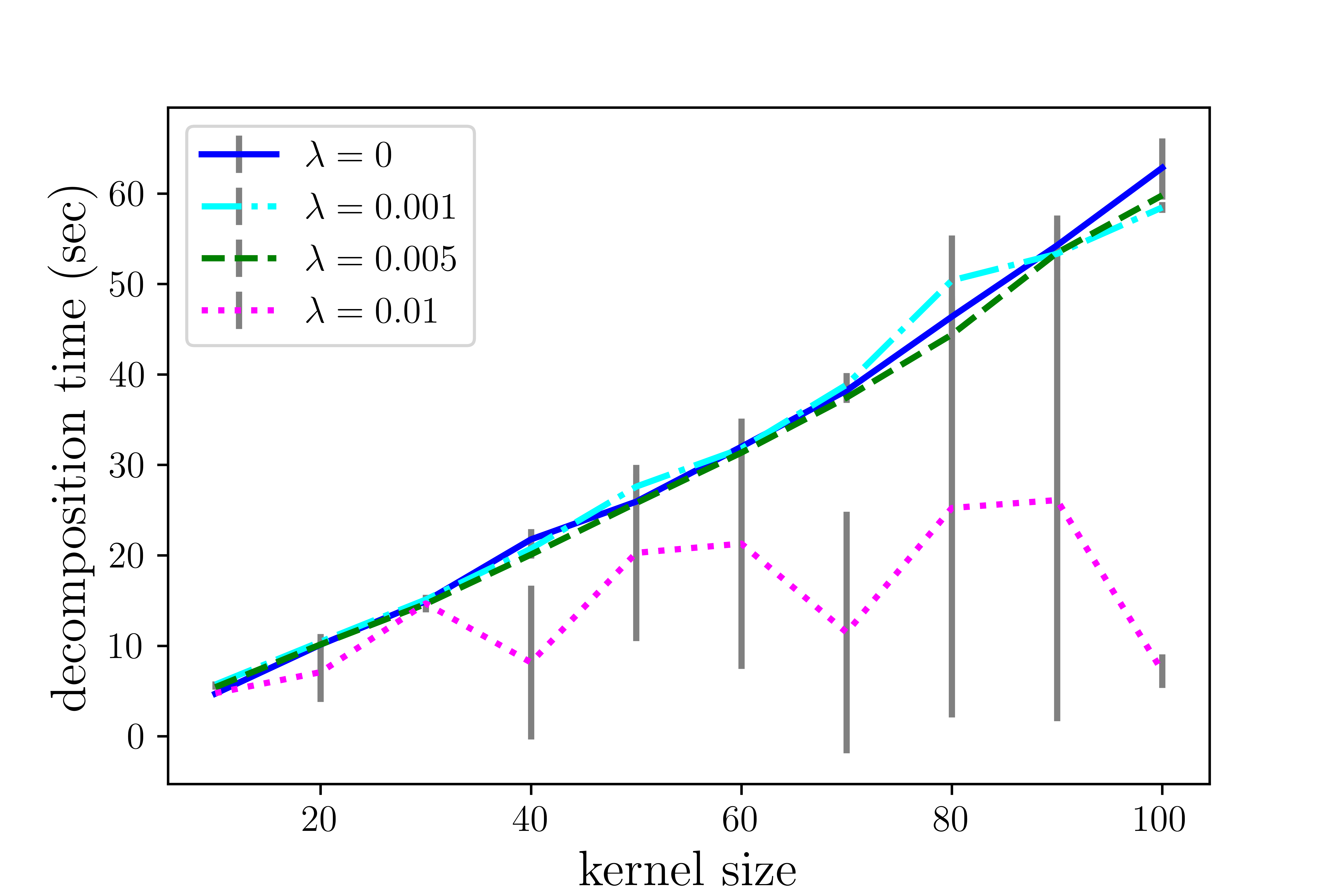}
		\caption{\footnotesize decomposition time}
		\label{fig:sub2}
	\end{subfigure}
	\caption{Effect of kernel size and mapping sparsity on quality of approximation and computational complexity.}
	\label{fig:errortime}
\end{figure*}

\subsection{Convergence Analysis of BCGD}

In this section, we analyze the convergence properties of BCGD. Specifically, in Theorem \ref{thm:c} we
establish that given judicious choices of the step sizes, the value of the objective function in \eqref{p0} decreases as one alternates between updating the factor matrices  which in turn implies that the BCGD algorithm converges to a stationary point of the nonconvex optimization task in \eqref{p0}.
\begin{theorem}\label{thm:c}
Assume the step sizes of Algorithm 1 satisfy
\begin{equation}\label{eq:alpha}
\alpha_t=\frac{C_1\|\gf(\U_t)\|_F^2}{\|\gf(\U_t)\Pt_t\V_t\|_F^2},
\end{equation}
\begin{equation}\label{eq:beta}
\beta_t=\frac{C_2\|\gf(\V_t)\|_F^2}{\|\U_{t+1}\gf(\Pt_t)\V_t\|_F^2},
\end{equation}
\begin{equation}\label{eq:gamma}
\gamma_t=\frac{C_3\|\gf(\Pt_t)\|_F^2}{\|\U_{t+1}\Pt_{t+1}\gf(\V_t)\|_F^2},
\end{equation}
where $C_1,C_2,C_3 \in(0,2)$. Then, the solution $(\U^\ast,\Pt^\ast,\V^\ast)$ found by the BCGD scheme	is a stationary point of \eqref{p0}. 
\end{theorem}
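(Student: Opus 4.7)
The plan is to establish monotone decrease of the composite objective in (3) at each block update, and then use lower-boundedness together with compactness of the constraint set to force the first-order optimality conditions in the limit. Since the three block updates are structurally identical, I focus on the $\U$ step and indicate that the $\Pt$ and $\V$ blocks follow symmetrically. With $\Pt_t$ and $\V_t$ held fixed, the smooth part $g(\U) = \tfrac{1}{2}\|\P - \U\Pt_t\V_t\|_F^2$ is quadratic in $\U$, so its Taylor expansion at $\U_t$ is exact:
\begin{equation*}
g(\U_t - \alpha\gf(\U_t)) = g(\U_t) - \alpha\|\gf(\U_t)\|_F^2 + \tfrac{\alpha^2}{2}\|\gf(\U_t)\Pt_t\V_t\|_F^2.
\end{equation*}
Minimizing in $\alpha$ yields the exact line-search value $\alpha^\star = \|\gf(\U_t)\|_F^2/\|\gf(\U_t)\Pt_t\V_t\|_F^2$, so the choice $\alpha_t = C_1\alpha^\star$ specified in (4) produces
\begin{equation*}
g(\U_{t+\frac{1}{2}}) = g(\U_t) - C_1\bigl(1 - \tfrac{C_1}{2}\bigr)\frac{\|\gf(\U_t)\|_F^4}{\|\gf(\U_t)\Pt_t\V_t\|_F^2},
\end{equation*}
where the coefficient $C_1(1 - C_1/2)$ is strictly positive precisely on $(0,2)$, which pins down the admissible step-size range stated in the theorem.

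Next, I must verify that post-processing $\U_{t+\frac{1}{2}}$ by $\mathcal{T}_{\lambda_u/2}$ and then $\Pi_{\triangle_k}$ preserves this decrease for the full objective $f(\U,\Pt,\V) = g(\U,\Pt,\V) + \lambda_u\|\U\|_1 + \lambda_v\|\V\|_1$ restricted to the row-stochastic constraints. Using the variational characterization of each operator—the thresholding is the minimizer of $\tfrac{1}{2}\|x - \U_{t+\frac{1}{2}}\|_F^2 + \tfrac{\lambda_u}{2}\|x\|_1$, and the simplex projection is the minimizer of Euclidean distance subject to $\U\one = \one,\; \U \ge 0$—I set up a majorization-minimization inequality for the sequential application. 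A useful observation is that on the feasible row-stochastic set, $\|\U\|_1$ reduces to the constant $n$, so the $\ell_1$ term acts principally as a sparsity-shaping device on the intermediate iterate and does not obstruct descent once the projection restores feasibility. This is the step I expect to be the main obstacle: the composition $\Pi_{\triangle_k}\circ\mathcal{T}_{\lambda_u/2}$ is not in general the proximal operator of $(\lambda_u/2)\|\cdot\|_1 + \iota_{\triangle_k}$, so the decrease cannot be read off from the standard proximal-gradient descent lemma and must instead be argued directly through the sequential variational inequalities of the two operators, exploiting the explicit soft-thresholding plus water-filling structure of Algorithm 2.

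Applying the same reasoning to the $\Pt$ and $\V$ blocks with the step sizes in (5)--(6) yields $f(\U_{t+1},\Pt_{t+1},\V_{t+1}) \le f(\U_t,\Pt_t,\V_t)$. Since $f \ge 0$ is bounded below and the iterates lie on a compact product of probability simplices, the monotone sequence $\{f(\U_t,\Pt_t,\V_t)\}$ converges and the per-iteration decrements are summable. The denominators appearing in the three per-block decrement bounds are uniformly bounded above on this compact set, forcing the numerators—fourth powers of the block-gradient norms—to vanish along the sequence. Finally, combining this vanishing-gradient property with the fixed-point identity $\U^\ast = \Pi_{\triangle_k}(\mathcal{T}_{\lambda_u/2}(\U^\ast - \alpha^\ast\gf(\U^\ast)))$ that any accumulation point must satisfy, together with the analogous identities for $\Pt^\ast$ and $\V^\ast$, delivers the first-order stationarity conditions of (3) at the limit $(\U^\ast,\Pt^\ast,\V^\ast)$.
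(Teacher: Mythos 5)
Your core computation is exactly the paper's: with the other blocks frozen, the smooth term is an exact quadratic along the gradient direction, the prescribed step is $C_1$ times the exact line-search minimizer, and the per-step change is $\left(\tfrac{C_1^2}{2}-C_1\right)\|\gf(\U_t)\|_F^4/\|\gf(\U_t)\Pt_t\V_t\|_F^2$, which is negative precisely for $C_1\in(0,2)$; the $\Pt$ and $\V$ blocks are treated identically in both arguments. Where you differ is in what surrounds that computation, and both differences are substantive. First, you correctly flag that $\Pi_{\triangle_k}\circ\mathcal{T}_{\lambda_u/2}$ is not the proximal operator of $\tfrac{\lambda_u}{2}\|\cdot\|_1+\iota_{\triangle_k}$, so the claim that the thresholding-plus-projection step does not increase the objective cannot be read off the standard ISTA or projected-gradient descent lemmas; the paper asserts exactly the needed inequality, $f(\U_{t+1},\Pt_t,\V_t)\le f(\U_{t+\frac{1}{2}},\Pt_t,\V_t)$, with only a citation to those lemmas (which moreover require a step size tied to a Lipschitz constant that the exact-line-search step here need not satisfy). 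The gap you leave open is therefore the same gap the paper leaves open; your remark that $\|\U\|_1\equiv n$ on the row-stochastic set, so the $\ell_1$ term is constant on the feasible region, is a genuine observation the paper does not make and is probably the right lever for closing that step. Second, your tail argument --- lower-boundedness plus compactness of the product of simplices gives summable decrements, the denominators are bounded above on the compact feasible set so the block gradients vanish, and the fixed-point identities at accumulation points yield stationarity --- supplies the part of the theorem (``the solution found is a stationary point'') that the paper simply asserts once monotone decrease is established. In short: same engine, but you are more candid about the one step neither proof actually closes and more complete on the passage from descent to stationarity. One small caution for both arguments: the step sizes are undefined when $\|\gf(\U_t)\Pt_t\V_t\|_F$ (or an analogue) vanishes while the corresponding gradient does not; this degenerate case deserves a sentence.
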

\begin{proof}
Let $f(\U,\Pt,\V)= \frac{1}{2}\|\P-\U\Pt\V \|^{2}_{F}$.
For the proposed algorithm to converge, it must hold that 
\begin{equation}
f(\U_{t+1},\Pt_{t+1},\V_{t+1}) \leq f(\U_{t},\Pt_{t},\V_{t}), 
\end{equation}
for all $t$. Note that since $\mathcal{T}_{\eta}(x)$ and $\Pi_{\triangle_d}$ are projections onto convex sets of constraints (the former being the projection operator onto the $\ell_1$ ball), following a similar analysis as those in the proofs of the projected gradient descent and the iterative shrinkage-thresholding algorithms (ISTA) \cite{beck2009fast,bubeck2015convex} 
\[f(\U_{t+1},\Pt_{t},\V_{t}) \leq f(\U_{t+\frac{1}{2}},\Pt_{t},\V_{t}),\] \[f(\U_{t+1},\Pt_{t+1},\V_{t}) \leq f(\U_{t+1},\Pt_{t+\frac{1}{2}},\V_{t}),\] \[f(\U_{t+1},\Pt_{t+1},\V_{t+1}) \leq f(\U_{t+1},\Pt_{t+1},\V_{t+\frac{1}{2}}).\] Thus, it suffices to show 
\begin{equation}\label{eq:cond1}
f(\U_{t+\frac{1}{2}},\Pt_{t},\V_{t}) \leq f(\U_{t},\Pt_{t},\V_{t}),
\end{equation}
\begin{equation}\label{eq:cond2}
f(\U_{t+1},\Pt_{t+\frac{1}{2}},\V_{t}) \leq f(\U_{t+1},\Pt_{t},\V_{t}),
\end{equation}
\begin{equation}\label{eq:cond3}
f(\U_{t+1},\Pt_{t+1},\V_{t+\frac{1}{2}}) \leq f(\U_{t+1},\Pt_{t+1},\V_{t}).
\end{equation}
We now show that under \eqref{eq:alpha}, the sufficient condition \eqref{eq:cond1} holds. Proof of the parts that \eqref{eq:cond2} and \eqref{eq:cond3} hold under \eqref{eq:beta} and \eqref{eq:gamma}, respectively, follows a similar argument.

Note that
\begin{multline}\label{eq:diff}
f(\U_{t+\frac{1}{2}},\Pt_{t},\V_{t}) - f(\U_{t},\Pt_{t},\V_{t})\\
\quad=\frac{1}{2}\|\P-\U_{t}\Pt\V_{t} + \alpha_t\gf(\U_t)\Pt_{t}\V_{t}\|^{2}_{F}\\
\phantom{111111111111111}-\frac{1}{2}\|\P-\U_{t}\Pt\V_{t} \V_t\|^{2}_{F}\\
\quad= \alpha_t\mathrm{Tr}\left((\P-\U_{t}\Pt\V_{t})^\top\gf(\U_t)\Pt_{t}\V_{t}\right)\\
+\frac{\alpha_t^2}{2}\|\gf(\U_t)\Pt_{t}\V_{t}\|^{2}_{F}.
\end{multline}
Now, consider the first term in the last line of \eqref{eq:diff}. Following straightforward linear algebra, we obtain
\begin{equation}
\begin{aligned} 
&\mathrm{Tr}\left((\P-\U_{t}\Pt\V_{t})^\top\gf(\U_t)\Pt_{t}\V_{t}\right)\\
&=-\mathrm{Tr}\left((\P-\U_{t}\Pt\V_{t})^\top(\P-\U_t\Pt_t\V_t)\V_t^\top\Pt_t^\top\Pt_{t}\V_{t}\right)\\
&=-\mathrm{Tr}\left(\Pt_{t}\V_{t}(\P-\U_{t}\Pt\V_{t})^\top(\P-\U_t\Pt_t\V_t)\V_t^\top\Pt_t^\top\right)\\
&=-\|(\P-\U_t\Pt_t\V_t)\V_t^\top\Pt_t^\top\|_F^2=-\|\gf(\U_t)^\top\|_F^2.
\end{aligned} 
\end{equation}
Therefore,
\begin{multline}
f(\U_{t+\frac{1}{2}},\Pt_{t},\V_{t}) - f(\U_{t},\Pt_{t},\V_{t})\\
\qquad=\frac{\alpha_t^2}{2}\|\gf(\U_t)\Pt_{t}\V_{t}\|^{2}_{F}
-\alpha_t\|\gf(\U_t)^\top\|_F^2\\
\hspace{-2cm}=\left(\frac{C_1^2}{2}-C_1\right)\frac{\|\gf(\U_t)^\top\|_F^4}{\|\gf(\U_t)\Pt_{t}\V_{t}\|_F^2},
\end{multline}
where the last equality follows according to the definition of $\alpha_t$ in \eqref{eq:alpha}. It is now clear that if $C_1 \in (0,2)$ it must be the case that \eqref{eq:cond1} holds, which in turn implies convergence of Algorithm 1.
\end{proof}

\subsection{Computational Complexity of BCGD}

The computational complexity of the proposed BCGD algorithm is analyzed next. Note that the determining factor for cost per iteration of Algorithm \ref{Alg1} is computation of the gradients. Finding $\gf(\U_t)$ incurs $\mathcal{O}(nk)$ as it contains matrix products between $k\times k$ and $k\times n$ matrices. Similarly, $\gf(\Pt_t)$ and $\gf(\V_t)$ require $\mathcal{O}(nk)$ computational costs. Thus, Algorithm \ref{Alg1} incurs a linear complexity of $\mathcal{O}(nkT)$.
\section{Simulation Results}\label{sec:sim}
We implemented the proposed BCGD algorithm for nonnegative matrix factorization in Python.\footnote{The code is available at https://github.com/MahsaGhasemi/state-abstraction}
We evaluated the performance of the proposed abstraction solution for a variety of parameters. In particular, we investigated the effect of sparsity-promoting term on the approximation quality of the factorized transition matrix. We also ran the algorithm for different values of step size and compared the convergence of BCGD. 
For these simulations, we generated a transition matrix $\P$ of size $100 \times 100$ for the original Markov chain. The transition matrix has a rank of $25$ and is constructed by multiplying three stochastic matrices. Each stochastic matrix is generated by independently sampling its rows by a uniform sampling from the simplex of proper size. We set the number of iterations of BCGD to $1000$. If the difference between two consecutive instances of a factor, i.e., $\U_t$, $\Pt_t$, or $\V_t$ falls below a threshold $10^{-8}$ of their magnitude, the algorithm terminates, where the magnitudes are measured by Frobenius norm. Similarly, if the difference between two consecutive values of the objective function falls below $10^{-8}$, the algorithm terminates. We run the simulation with each set of parameters for $10$ independent instances and report the average values along the standard deviations.
All simulations were run on a machine with 2.0 GHz Intel Core i7-4510U CPU and with 8.00 GB RAM.

\subsection{Effect of Regularization Parameter on Performance}\label{ssec:sim-lambda}

One of the key differences of the proposed abstraction formulation is the integration of a $\ell_1$-norm regularization term in the optimization objective. This term promotes sparsity for the bidirectional mapping between the original and the kernel space. Intuitively, this sparsity ensures that the meta states in the kernel space are representative of a small number of original states. The sparsity level of the mappings depends on regularization parameters $\lambda_u$ and $\lambda_v$. 

In Figure~\ref{fig:errortime}, we demonstrate the effect of these parameters on the quality of the solutions. We compare four different values, more specifically, $\lambda = \lambda_u = \lambda_v \in \{0,0.001,0.005,0.01\}$. The results in Figure~\ref{fig:errortime}(a) show that a careful selection of $\lambda$, while increasing sparsity, does not affect the quality of the decomposition in terms of the approximation error, computed by $\|\P-\U\Pt\V\|^{2}_{F}$. However, a large $\lambda$ leads to high approximation error. Therefore, one has to find the right trade-off between lower approximation error and higher sparsity of the mappings. Furthermore, we can see that error significantly reduces once the size of the kernel transition is set to values higher than $20$. Above that value, the approximation error only slightly changes. Therefore, the proposed algorithm is successful in identifying a low-rank representation.

Figure~\ref{fig:errortime}(b) depicts the running time of the BCGD algorithm. As derived in Section~IV-B, the running time is linear with respect to the kernel size and the results reflect that. Furthermore, the addition of $\ell_1$-norm regularization term has negligible effect on the running time. Note that for $\lambda = 0.01$, the algorithm terminates early as it cannot improve the objective value sufficiently. Furthermore, for this choice of $\lambda$, we can observe high variance in the running time due to the varying sparsity degree of the randomly generated matrices.

\subsection{Effect of Step Size on Convergence}\label{ssec:sim-conv}

In Section~IV-A, we derived necessary conditions on the step sizes $\alpha$, $\beta$, and $\gamma$ for the convergence of the BCGD algorithm. In this section, we show the sensitivity of the convergence to different values of the step size. To that end, we ran the algorithm for different values of step size that we kept constant throughout the run. In particular, we ran the algorithm for $\alpha = \beta = \gamma \in \{0.002,0.02,0.2\}$. Figure~\ref{fig:stepsize} depicts the evolution of error over the course of $500$ iterations of BCGD. While the algorithm converges for all three values, the smaller step sizes achieve lower approximation error at the end. We also observed that the algorithm would often diverge for step sizes over $0.2$.

\begin{figure}[t]
    \centering
    \includegraphics[width=0.45\textwidth]{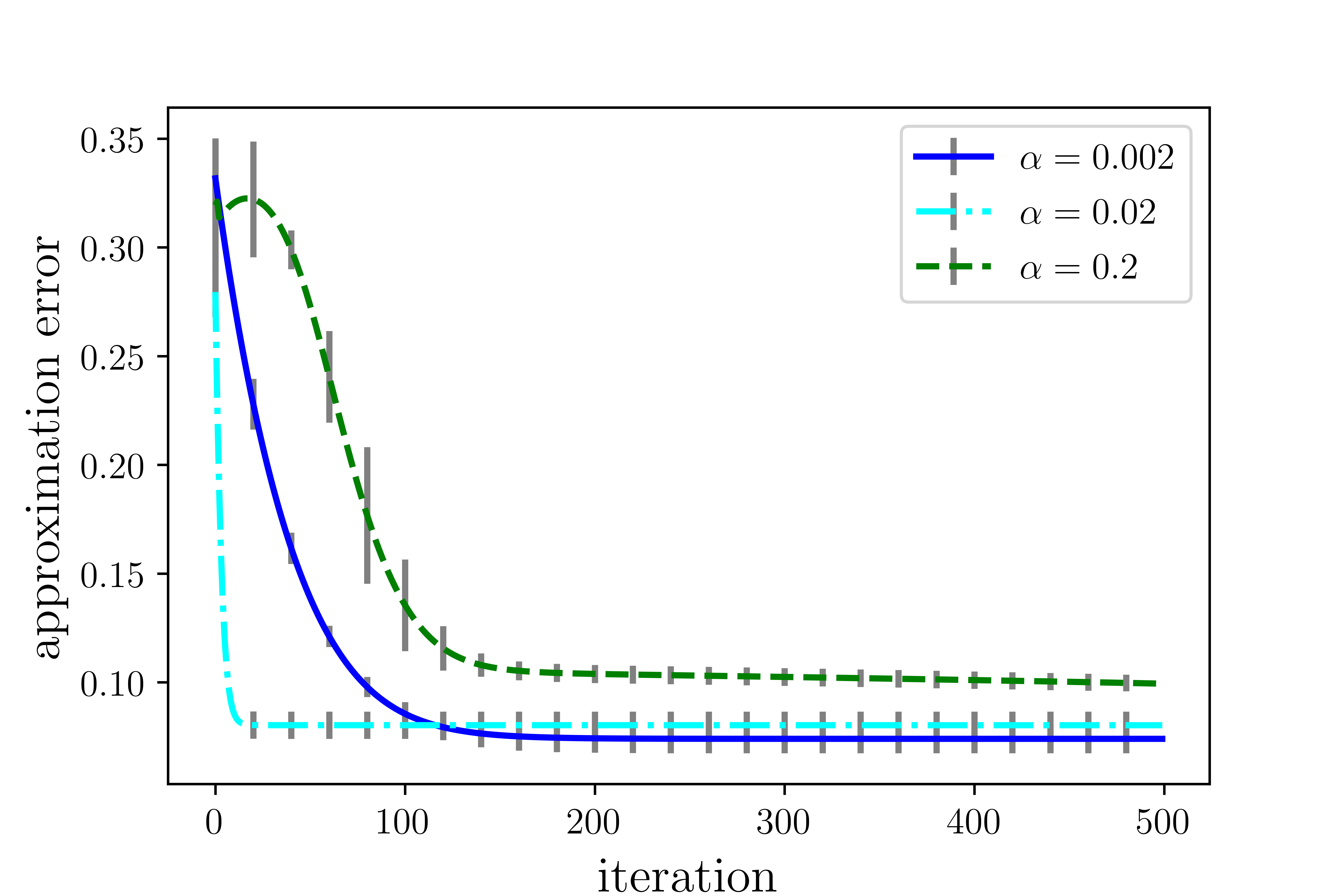}
    \caption{The effect of step size on the convergence of BCGD.}
    \label{fig:stepsize}
\end{figure}
\section{Conclusion} \label{sec:concl}
We studied the problem of approximating a large-scale Markov chain by a surrogate model in a low-dimensional state space, called kernel space. We proposed a nonnegative matrix factorization formulation that learns a low-rank kernel transition as well as a pair of forward and backward mappings while promoting a sparse connection between the high and low-dimensional states. We showed that the formulated optimization is amenable to an efficient iterative solution that converges to a stationary solution under a judicious schedule of step sizes. 
As part of future work, we aim to extend the proposed matrix factorization formulation to model reduction of Markov decision processes. Furthermore, we would like to evaluate the abstracted Markov decision process in different analyses, including model checking and value function approximation.

\bibliographystyle{ieeetr}
\bibliography{main.bib}

\begin{thebibliography}{10}

\bibitem{kushner1971introduction}
H.~Kushner, ``Introduction to stochastic control,'' tech. rep., Brown
  University Providence Division of Applied Mathematics, 1971.

\bibitem{bishop2006pattern}
C.~M. Bishop, {\em Pattern recognition and machine learning}.
\newblock Springer, 2006.

\bibitem{eddy1996hidden}
S.~R. Eddy, ``Hidden {M}arkov models,'' {\em Current opinion in Structural
  Biology}, vol.~6, no.~3, pp.~361--365, 1996.

\bibitem{koller2009probabilistic}
D.~Koller and N.~Friedman, {\em Probabilistic graphical models: {P}rinciples
  and techniques}.
\newblock MIT press, 2009.

\bibitem{ghasemi2019perception}
M.~Ghasemi and U.~Topcu, ``Perception-aware point-based value iteration for
  partially observable {M}arkov decision processes,'' in {\em Proc.
  International Joint Conference on Artificial Inteligence}, pp.~2371--2377,
  2019.

\bibitem{abel2018state}
D.~Abel, D.~Arumugam, L.~Lehnert, and M.~Littman, ``State abstractions for
  lifelong reinforcement learning,'' in {\em Proc. International Conference on
  Machine Learning}, pp.~10--19, PMLR, 2018.

\bibitem{rogers1991aggregation}
D.~F. Rogers, R.~D. Plante, R.~T. Wong, and J.~R. Evans, ``Aggregation and
  disaggregation techniques and methodology in optimization,'' {\em Operations
  Research}, vol.~39, no.~4, pp.~553--582, 1991.

\bibitem{bertsekas1996neuro}
D.~P. Bertsekas and J.~N. Tsitsiklis, {\em Neuro-dynamic programming}, vol.~5.
\newblock Athena Scientific Belmont, MA, 1996.

\bibitem{tsitsiklis1996feature}
J.~N. Tsitsiklis and B.~Van~Roy, ``Feature-based methods for large scale
  dynamic programming,'' {\em Machine Learning}, vol.~22, no.~1-3, pp.~59--94,
  1996.

\bibitem{molgedey1994separation}
L.~Molgedey and H.~G. Schuster, ``Separation of a mixture of independent
  signals using time delayed correlations,'' {\em Physical Review Letters},
  vol.~72, no.~23, p.~3634, 1994.

\bibitem{schmid2010dynamic}
P.~J. Schmid, ``Dynamic mode decomposition of numerical and experimental
  data,'' {\em Journal of Fluid Mechanics}, vol.~656, pp.~5--28, 2010.

\bibitem{klus2018data}
S.~Klus, F.~N{\"u}ske, P.~Koltai, H.~Wu, I.~Kevrekidis, C.~Sch{\"u}tte, and
  F.~No{\'e}, ``Data-driven model reduction and transfer operator
  approximation,'' {\em Journal of Nonlinear Science}, vol.~28, no.~3,
  pp.~985--1010, 2018.

\bibitem{ren2002state}
Z.~Ren and B.~H. Krogh, ``State aggregation in {M}arkov decision processes,''
  in {\em Proc. Conference on Decision and Control}, vol.~4, pp.~3819--3824,
  IEEE, 2002.

\bibitem{johns2007constructing}
J.~Johns and S.~Mahadevan, ``Constructing basis functions from directed graphs
  for value function approximation,'' in {\em Proc. International Conference on
  Machine Learning}, pp.~385--392, ACM, 2007.

\bibitem{parr2007analyzing}
R.~Parr, C.~Painter-Wakefield, L.~Li, and M.~Littman, ``Analyzing feature
  generation for value-function approximation,'' in {\em Proc. International
  Conference on Machine learning}, pp.~737--744, ACM, 2007.

\bibitem{petrik2007analysis}
M.~Petrik, ``An analysis of {L}aplacian methods for value function
  approximation in {MDP}s.,'' in {\em Proc. International Joint Conference on
  Artificial Inteligence}, pp.~2574--2579, 2007.

\bibitem{lee2001algorithms}
D.~D. Lee and H.~S. Seung, ``Algorithms for non-negative matrix
  factorization,'' in {\em Proc. Advances in Neural Information Processing
  Systems}, pp.~556--562, 2001.

\bibitem{cybenko2011learning}
G.~Cybenko and V.~Crespi, ``Learning hidden {M}arkov models using nonnegative
  matrix factorization,'' {\em IEEE Transactions on Information Theory},
  vol.~57, no.~6, pp.~3963--3970, 2011.

\bibitem{hashemi2018sparse}
A.~Hashemi, B.~Zhu, and H.~Vikalo, ``Sparse tensor decomposition for haplotype
  assembly of diploids and polyploids,'' {\em BMC Genomics}, vol.~19, no.~4,
  p.~191, 2018.

\bibitem{hsu2012spectral}
D.~Hsu, S.~M. Kakade, and T.~Zhang, ``A spectral algorithm for learning hidden
  {M}arkov models,'' {\em Journal of Computer and System Sciences}, vol.~78,
  no.~5, pp.~1460--1480, 2012.

\bibitem{huang2016recovering}
Q.~Huang, S.~M. Kakade, W.~Kong, and G.~Valiant, ``Recovering structured
  probability matrices,'' {\em arXiv preprint arXiv:1602.06586}, 2016.

\bibitem{li2018estimation}
X.~Li, M.~Wang, and A.~Zhang, ``Estimation of {M}arkov chain via
  rank-constrained likelihood,'' in {\em Proc. International Conference on
  Machine Learning}, pp.~4729--4744, PMLR, 2018.

\bibitem{zhang2019spectral}
A.~Zhang and M.~Wang, ``Spectral state compression of {M}arkov processes,''
  {\em IEEE Transactions on Information Theory}, 2019.

\bibitem{duan2019state}
Y.~Duan, Z.~T. Ke, and M.~Wang, ``State aggregation learning from {M}arkov
  transition data,'' in {\em Proc. Advances in Neural Information Processing
  Systems}, pp.~4486--4495, Curran Associates, Inc., 2019.

\bibitem{runnenburg1966markov}
J.~T. Runnenburg, {\em Markov processes in waiting-time and renewal theory}.
\newblock PhD thesis, Thesis, Poortpers, Amsterdam, 1966.

\bibitem{hoekstra1983markov}
{\AE}.~H. Hoekstra, ``On {M}arkov chains of finite rank,'' 1983.

\bibitem{bertsekas1995dynamic}
D.~P. Bertsekas, {\em Dynamic programming and optimal control}, vol.~1.
\newblock Athena scientific Belmont, MA, 1995.

\bibitem{chen2011projection}
Y.~Chen and X.~Ye, ``Projection onto a simplex,'' {\em arXiv preprint
  arXiv:1101.6081}, 2011.

\bibitem{beck2009fast}
A.~Beck and M.~Teboulle, ``A fast iterative shrinkage-thresholding algorithm
  for linear inverse problems,'' {\em SIAM Journal on Imaging Sciences},
  vol.~2, no.~1, pp.~183--202, 2009.

\bibitem{bubeck2015convex}
S.~Bubeck {\em et~al.}, ``Convex optimization: {A}lgorithms and complexity,''
  {\em Foundations and Trends in Machine Learning}, vol.~8, no.~3-4,
  pp.~231--357, 2015.

\end{thebibliography}

\end{document}